\DeclareMathOperator*{\argmin}{arg\,min}
\newtheorem{lem}{Lemma}
\theoremstyle{definition}
\newtheorem{defn}{Definition}
\title{Fast and Efficient Boolean Matrix Factorization by Geometric Segmentation }
\author{
\Large \textbf{Changlin Wan\textsuperscript{\rm 1,2}, Wennan Chang\textsuperscript{\rm 1,2}, Tong Zhao\textsuperscript{\rm 3}, Mengya Li\textsuperscript{\rm 2}, Sha Cao\textsuperscript{\rm 2,*}, Chi Zhang\textsuperscript{\rm 2,*}}\\ 
\textsuperscript{\rm 1}Purdue University,\textsuperscript{\rm 2}Indiana University, \textsuperscript{\rm 3}Amazon\\ 
\{wan82,chang534\}@purdue.edu, zhaoton@amazon.com, imyli1024@gmail.com, \{shacao,czhang87\}@iu.edu
}
\begin{document}

\maketitle

\begin{abstract}
Boolean matrix has been used to represent digital information in many fields, including bank transaction, crime records, natural language processing, protein-protein interaction, etc.
Boolean matrix factorization (BMF) aims to decompose a boolean matrix via the product of two low-ranked boolean matrices, benefiting a number of applications on boolean matrices, e.g., data denoising, clustering, dimension reduction and community detection. 
Inspired by binary matrix permutation theories and geometric segmentation, in this work, we developed a fast and scalable BMF approach, called \textbf{MEBF} (\textbf{M}edian \textbf{E}xpansion for \textbf{B}oolean \textbf{F}actorization). MEBF adopted a heuristic approach to locate binary patterns presented as submatrices that are dense in 1’s. In each iteration, MEBF permutates the rows and columns such that the permutated matrix is approximately \textbf{U}pper \textbf{T}riangular-\textbf{L}ike (\textbf{UTL}) with so-called \textbf{S}imultaneous \textbf{C}onsecutive-ones \textbf{P}roperty (\textbf{SC1P}). The largest submatrix dense in 1 would lie on the upper triangular area of the permutated matrix, and its location was determined based on a geometric segmentation of a triangular. We compared MEBF with state-of-the-art BMF baselines on data scenarios with different density and noise levels. Through comprehensive experiments, MEBF demonstrated superior performances in lower reconstruction error, and higher computational efficiency, as well as more accurate density pattern mining than state-of-the-art methods such as ASSO, PANDA and Message Passing. We also presented the application of MEBF on non-binary data sets, and revealed its further potential in knowledge retrieving and data denoising on general matrix factorization problems. 
\end{abstract}

\section{Introduction}
Binary data gains more and more attention during the transformation of modern living \cite{kocayusufoglu2018summarizing,balasubramaniam2018people}. It consists of a large domain of our everyday life, where the 1s or 0s in a binary matrix can physically mean  whether or not an event of online shopping transaction, web browsing, medical record, journal submission, etc, has occurred or not. The scale of these datasets has increased exponentially over the years. Mining the patterns within binary data as well as adapting to the drastic increase of dimensionality is of prominent interests for nowadays data science research. Recent study also showed that some continuous data could benefit from binary pattern mining. For instance, the binarization of continuous single cell gene expression data to its on and off state, can better reflect the coordination patterns of genes in regulatory networks \cite{larsson2019genomic}. However, owing to its two value characteristics, the rank of a binary matrix under normal linear algebra can be very high due to certain spike rows or columns. This makes it infeasible to apply established methods such as SVD and PCA for BMF \cite{wall2003singular}. \\
Boolean matrix factorization (BMF) has been developed particularly for binary pattern mining, and it factorizes a binary matrix into approximately the product of two low rank binary matrices following Boolean algebra, as shown in Figure 1. The decomposition of a binary matrix into low rank binary patterns is equivalent to locating submatrices that are dense in 1. Analyzing binary matrix with BMF shows its unique power. In the most optimal case, it significantly reduces the rank of the original matrix calculated in normal linear algebra to its log scale \cite{monson1995survey}. Since the binary patterns are usually embedded within noisy and randomly arranged binary matrix, BMF is known to be an NP-hard problem \cite{miettinen2008discrete}. 

\begin{figure*}[t]
\centering
\includegraphics[width=0.9\textwidth]{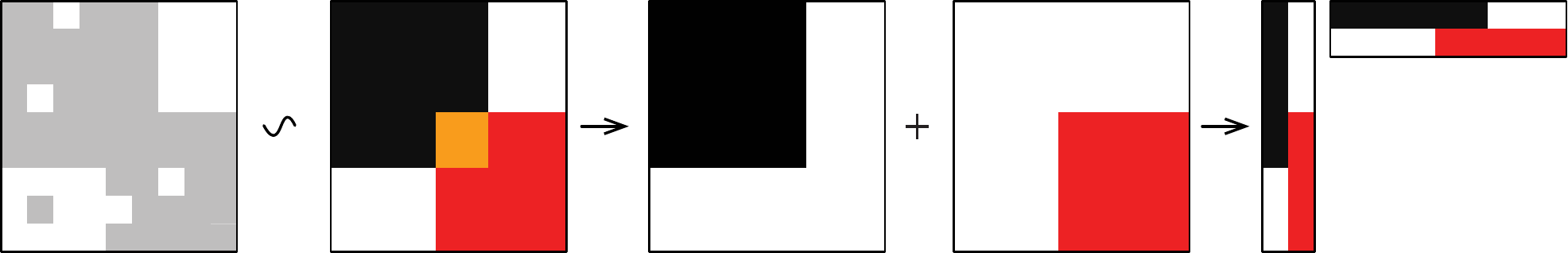}
\caption{BMF, the addition of rank 1 binary matrices}
\label{fig1}
\end{figure*}

\section{Background}
\subsection{Related work}
BMF was first introduced as a set basis problem in 1975 \cite{stockmeyer1975set}. This area has received wide attention after a series of work by Mittenin et al \cite{miettinen2008discrete,miettinen2014mdl4bmf,karaev2015getting}. Among them, the ASSO algorithm performs factorization by retrieving binary bases from row-wise correlation matrix in a heuristic manner \cite{miettinen2008discrete}. Despite its popularity, the high computational cost of ASSO makes it impracticable when dealing with large scale data. Recently,  an algorithm called Nassua was developed by the same group \cite{karaev2015getting}. Nassua optimizes the initialization of the matrix factorization by locating dense seeds hidden within the matrix, and with improved performance comparing to ASSO. However, optimal parameter selection remains a challenge for Nassua. A second series of work called PANDA was  developed by Claudio et al \cite{lucchese2010mining,lucchese2013unifying}. PANDA aims to find the most significant patterns in the current binary matrix by discovering core patterns iteratively \cite{lucchese2010mining}. After each iteration, PANDA only retains a residual matrix with all the non-zero values covered by identified patterns removed. Later, PANDA+ was recently developed to reduce the noise level in core pattern detection and extension \cite{lucchese2013unifying}. These two methods also suffer from inhibitory computational cost, as they need to recalculate a global loss function at each iteration. More algorithms and applications of BMF have been proposed in recent years. FastStep~\cite{araujo2016faststep} relaxed BMF constraints to non-negativity  by integrating non-negative matrix factorization (NMF) and Boolean thresholding. But interpreting derived non-negative bases could also be challenging. With prior network information, Kocayusufoglu et al  decomposes binary matrix in a stepwise fashion with bases that are sampled from given network space \cite{kocayusufoglu2018summarizing}. Bayesian probability mapping has also been applied in this field . Ravanbakhsh et al proposed a probability graph model called “factor-graph” to characterize the embedded patterns, and developed a message passing approach,  called MP \cite{ravanbakhsh2016boolean}. On the other hand, Ormachine, proposed by Rukat et al, provided a probabilistic generative model for BMF \cite{rukat2017bayesian}. Similarly, these Bayesian approaches suffer from  low computational efficiency. In addition, Bayesian model fitting could be highly sensitive to noisy data.

\subsection{Notations}
A matrix is denoted by a uppercase character with a super script \(n\times m\) indicating its dimension, such as  \(X^{n\times m}\), and with subscript \(X_{i,:}\), \(X_{:,j}\), \(X_{ij}\) indicating \(i\)th row, \(j\)th column, or the \((i,j)\)th element, respectively. A vector is denoted as a bold lowercase character, such as \(\textbf{a}\), and its subscript \(\textbf{a}_i\) indicates the \(i\)th element. A scalar value is represented by a lowercase character, such as \(a\), and \([a]\) as its integer part. \(|X|\) and \(|\textbf{x}|\) represents the $\ell_1$ norm of a matrix and a vector. Under the Boolean algebra, the basic operations include \(\wedge(AND, 1\wedge1=1, 1\wedge0=0, 0\wedge0=0)\), \(\vee(OR, 1\vee1=1, 0\vee1=1, 0\vee0=0)\), \(\neg(NOT, \neg1=0, \neg0=1)\). Denote the Boolean element-wise  sum, subtraction and product as \(A\oplus B=A\vee B\), \(A\ominus B=(A\wedge\neg B)\vee(\neg A\wedge B)\) and \(A\circledast B=A\wedge B\), and the Boolean matrix product of two Boolean matrices as \(X^{n\times m}=A^{n\times k}\otimes B^{k\times m}\), where \(X_{ij}=\vee^k_{l=1}A_{il}\wedge B_{lj}\).
\subsection{Problem statement}
Given a binary matrix \(X \in \{ 0,1 \}^{n \times m}\)  and a criteria parameter \(\tau\), the BMF problem is defined as identifying two binary matrices \(A^{*}\) and \(B^{*}\), called pattern matrices, that minimize the cost function  \(\gamma (A,B;X)\) under criteria \(\tau\), i.e., \((A^*,B^*)=\operatorname*{argmin}_{A,B}(\gamma(A,B;X)|\tau)\). Here the criteria \(\tau\) could vary with different problem assumptions. The criteria used in the current study is to identify \(A^*\) and \(B^*\) with at most \(k\) patterns, i.e., \(A\in\{0,1\}^{n\times k}\), \(B\in\{0,1\}^{k\times m}\), and the cost function is \(\gamma(A,B;X)=|X\ominus(A\otimes B)|\). We call the $l$th column of matrix $A$ and $l$th row of matrix $B$ as the $l$th binary pattern, or the $l$th basis, $l=1,...,k$.

\begin{figure}[t]
\centering
\includegraphics[width=0.9\columnwidth]{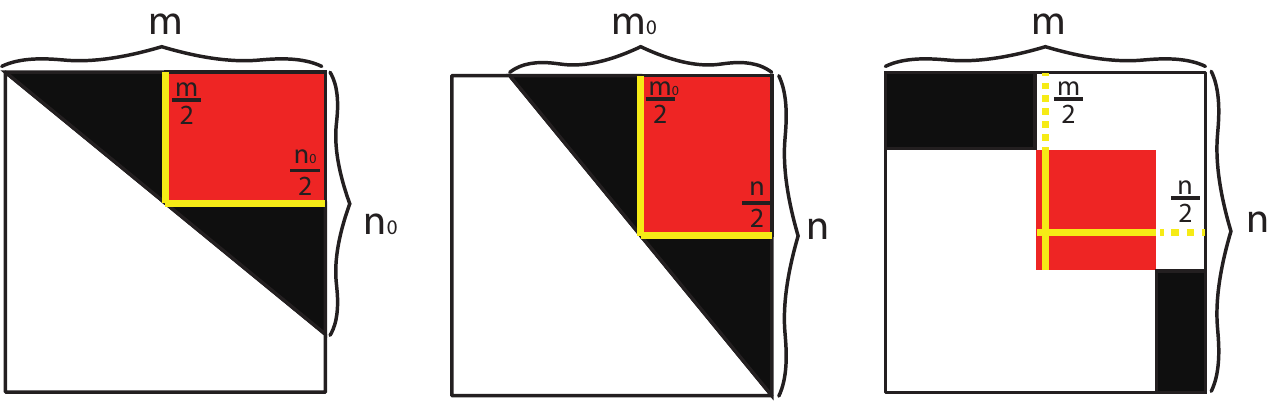}
\caption{Three simplified scenarios for UTL matrices with direct SC1P.}
\label{fig2}
\end{figure}

\begin{figure*}[t]
\centering
\includegraphics[width=\linewidth]{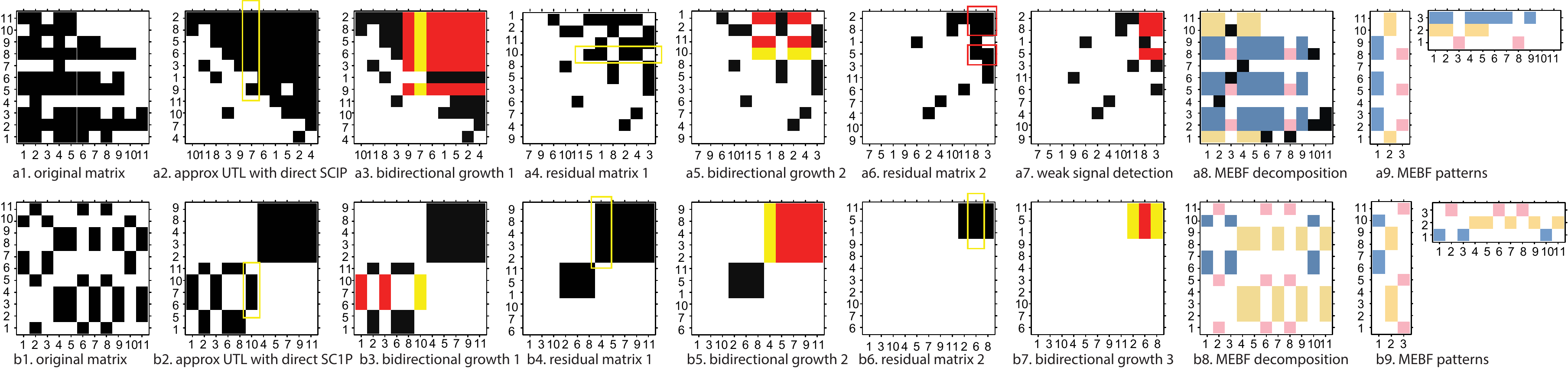}
\caption{A schematic overview of the MEBF pipeline for three data scenarios where the matrix is roughly UTL with SC1P. }
\label{fig3}
\end{figure*}

\section{MEBF Algorithm Framework}
\subsection{Motivation of MEBF}
BMF is equivalent to decomposing the matrix into the sum of multiple rank 1 binary matrices, each of which is also referred as a  pattern or basis in the BMF literature \cite{lucchese2010mining}. 

\begin{lem}[\textbf{Submatrix detection}]
Let \(A^*\),\(B^*\) be the solution to \(\argmin_{A\in\{0,1\}^{n\times k},B\in\{0,1\}^{k\times m}}|X\ominus(A\otimes B)|\), then the k patterns identified in \(A^*\), \(B^*\) correspond to \(k\) submatrices in \(X\) that are dense in 1's. In other words, finding \(A^*\), \(B^*\) is equivalent to identify submatrices \(X_{I_{l},J_{l}}, \, I_{l}\subset \{1,\dots,n\};J_{l}\subset \{1,\dots,m\}, \,l=1,...,k, \, s.t. |X_{I_{l},J_{l}}|\geq t_0(|I_{l}|*|J_{l}|)\). Here \(|I_{l}|\) is the cardinality of the index set \(I_{l}\), \(t_0\) is a positive number between 0 and 1 that controls the noise level of \(X_{I_{l},J_{l}}\). 
\end{lem}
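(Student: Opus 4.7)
My plan is to establish the claimed equivalence by exhibiting a natural bijection between rank-$k$ Boolean factorizations of $X$ and families of $k$ combinatorial rectangles in $[n]\times[m]$, and then to use a local perturbation argument on the XOR loss to force each rectangle to be dense in the $1$-entries of $X$.

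First, given any admissible pair $(A,B)$ with $A\in\{0,1\}^{n\times k}$ and $B\in\{0,1\}^{k\times m}$, I would define the index sets $I_l=\{i:A_{i,l}=1\}$ and $J_l=\{j:B_{l,j}=1\}$ for $l=1,\dots,k$. Directly from the definition of the Boolean product, $(A\otimes B)_{ij}=\bigvee_{l=1}^{k}(A_{il}\wedge B_{lj})=1$ if and only if $(i,j)\in I_l\times J_l$ for at least one $l$. Hence $A\otimes B$ is exactly the indicator matrix of the union $U=\bigcup_{l=1}^{k}(I_l\times J_l)$, so searching over rank-$k$ factorizations is the same as searching over $k$-tuples of combinatorial rectangles. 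This identifies the $l$-th pattern with the submatrix $X_{I_l,J_l}$ and establishes the correspondence asserted by the lemma.

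Second, I would rewrite the cost in terms of $U$. A standard expansion gives
\[
|X\ominus(A\otimes B)|=|\{(i,j)\in U:X_{ij}=0\}|+|\{(i,j)\notin U:X_{ij}=1\}|,
\]
i.e., false positives introduced by the rectangles plus uncovered $1$-entries of $X$. At the optimum $(A^*,B^*)$, each rectangle must be locally optimal with respect to insertion, deletion, and boundary exchange of its rows and columns. Concretely, letting $U_{-l}=\bigcup_{l'\neq l}(I_{l'}^*\times J_{l'}^*)$ and $N_l=(I_l^*\times J_l^*)\setminus U_{-l}$ denote the cells newly covered by pattern $l$, deleting the $l$-th pattern changes the cost by $|\{(i,j)\in N_l:X_{ij}=1\}|-|\{(i,j)\in N_l:X_{ij}=0\}|$, so optimality forces at least half of the cells in $N_l$ to be $1$-entries. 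An analogous row/column exchange argument on the boundary of $I_l^*\times J_l^*$ then propagates this fraction from the \emph{novel} cells $N_l$ back to the full block $X_{I_l^*,J_l^*}$, yielding a density lower bound $|X_{I_l^*,J_l^*}|\geq t_0(|I_l^*|\cdot|J_l^*|)$ with $t_0\in(0,1)$; in the clean, disjoint case $t_0=1/2$, while overlap and observational noise degrade $t_0$ in a controlled way.

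I expect the main obstacle to be precisely this last step, because Boolean OR saturates: a single $1$-entry of $X$ can be simultaneously explained by several rectangles, so the marginal contribution of any one pattern depends on all the others. Cleanly propagating density from the novel cells $N_l$ to the whole block requires either restricting to the disjoint case or absorbing the overlap terms into the constant $t_0$, which is exactly why the lemma treats $t_0$ as an adjustable noise parameter rather than pinning it to $1/2$.
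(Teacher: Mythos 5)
Your first paragraph is, in substance, the entirety of the paper's own proof: the authors simply set $I_l$ to be the support of the $l$-th column of $A^*$ and $J_l$ the support of the $l$-th row of $B^*$, and stop there. So on the part the paper actually proves, you take the same route, just spelled out more carefully (your observation that $A\otimes B$ is the indicator of $\bigcup_l I_l\times J_l$ is exactly the content of their one-line argument). Where you diverge is that you then try to \emph{justify} the density claim $|X_{I_l,J_l}|\geq t_0(|I_l|\cdot|J_l|)$ via a local perturbation argument, which the paper does not attempt at all --- it leaves $t_0$ as an informal ``noise level'' parameter with no quantitative guarantee. Your deletion argument is sound as far as it goes: since a pattern may be replaced by an all-zero column of $A$ and row of $B$ without changing $k$, optimality does force at least half of the \emph{newly covered} cells $N_l$ to be $1$'s. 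But the propagation from $N_l$ to the full block $I_l^*\times J_l^*$ is, as you yourself flag, not actually carried out, and it genuinely can fail: cells of $I_l^*\times J_l^*$ lying in the overlap with other rectangles are false positives chargeable to those rectangles' necessity, so no uniform $t_0$ bounded away from $0$ follows from local optimality alone without disjointness or further assumptions. In short, you prove everything the paper proves, correctly identify the part of the lemma that is really only heuristic, and your honest assessment of the obstacle (OR saturation and overlap) is accurate --- just be aware that the ``equivalence'' and the density bound in the lemma statement are not rigorously established by the paper either, so your incomplete second half is not a defect relative to the source.
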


\begin{proof}
\(\forall l\), it suffices to let \(I_{l}\) be the indices of the \(l\)th column of \(A^*\) , such that \(A^*_{:,l}=1\); and let \(J_l\) be the  indices of the \(l\)th row of \(B^*\)  such that \(B^*_{l,:}=1\).
\end{proof}

Motivated by Lemma 1, instead of looking for patterns directly, we turn to identify large submatrices in $X$ that are enriched by 1, such that each  submatrix would correspond to one binary pattern or basis.

\begin{defn}[\textbf{Direct consecutive-ones property, direct C1P}]
A binary matrix \(X\) has direct C1P if for each of its row vector, all 1's occur at consecutive indices.
\end{defn}

\begin{defn}[\textbf{Simultaneous consecutive-ones property, SC1P}]
A binary matrix \(X\)  has direct SC1P, if both \(X\)  and \(X^T\)  have direct C1P; and a binary matrix \(X\) has SC1P, if there exists a permutation of the rows and columns such that the permutated matrix has direct SC1P.
\end{defn}

\begin{defn}[\textbf{Upper Triangular-Like matrix, UTL}]
A binary matrix \(X^{m\times n}\) is called an \textbf{U}pper \textbf{T}riangular-\textbf{L}ike (\textbf{UTL}) matrix, if 1) \(\sum_{i=1}^m X_{i1} \leq \sum_{i=1}^m X_{i2}\leq\cdots\leq\sum_{i=1}^m X_{in}\); 2) \(\sum_{j=1}^n X_{1j} \geq \sum_{j=1}^n X_{2j}\geq\cdots\geq\sum_{j=2}^n X_{mj}\). In other words, the matrix has non-increasing row sums from top down, and non-decreasing column sums from left to right. 
\end{defn}

\begin{lem}[\textbf{UTL matrix with direct SC1P}]
Assume \(X\) has no all-zero rows or columns. If \(X\) is an UTL matrix and has direct SC1P, then an all 1 submatrix of the largest area in \(X\) is seeded where one of its column lies in the medium column of the matrix, or one of its row lies in the medium row of the matrix, as shown in Figure 2.
\end{lem}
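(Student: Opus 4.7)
The plan is to convert the hypotheses into explicit coordinates and then reduce to a one-dimensional optimization. First, direct SC1P places each row $i$'s $1$'s on a consecutive column interval $[l_i,r_i]$ and each column $j$'s $1$'s on a consecutive row interval $[u_j,d_j]$. Combining this with the UTL ordering (row sums non-increasing, column sums non-decreasing) and the no-empty-row/column assumption, a short case analysis shows that $l_i,r_i$ are non-decreasing in $i$ and $u_j,d_j$ are non-decreasing in $j$; the $1$-region therefore forms a staircase descending from the upper-right to the lower-left corner of $X$.

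Next, I would characterize an arbitrary all-$1$ submatrix $I\times J$. Direct SC1P lets me take $I=[i_1,i_2]$ and $J=[j_1,j_2]$, and the monotonicities collapse the requirement ``every entry equals $1$'' to the four extremal constraints $l_{i_2}\le j_1$, $r_{i_1}\ge j_2$, $u_{j_2}\le i_1$, and $d_{j_1}\ge i_2$. A maximum-area rectangle must be tight on at least one pair of opposite constraints, so it is determined by one of its extreme rows or one of its extreme columns; I call this row or column its \emph{seed}. Finding the largest all-$1$ submatrix then reduces to scanning over candidate seeds: for a seed column $j$ the largest rectangle is $[u_j,d_j]\times[l_{d_j},r_{u_j}]$, and symmetrically for a seed row.

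Third, I would show that the seed-$j$ area $(d_j-u_j+1)(r_{u_j}-l_{d_j}+1)$ is unimodal in $j$ and maximized at the median column, and symmetrically that the seed-row area is maximized at the median row. The heart of this step is the UTL-forced balance between the ``vertical size'' $d_j-u_j+1$, which grows in $j$, and the ``horizontal size'' $r_{u_j}-l_{d_j}+1$, which shrinks in $j$; these terms move in lockstep so that their product peaks at the middle index. Pairing the two arguments gives the claim: the largest all-$1$ rectangle must include the median column or the median row of $X$, matching the three scenarios of Figure~\ref{fig2}.

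The main obstacle is this third step. Monotone sums alone do not automatically yield a median maximizer, so the argument must exploit the row/column symmetry in the UTL definition together with the staircase shape from Step 1. I expect to verify each of the three Figure~\ref{fig2} scenarios (where the staircase degenerates to a triangular, right-aligned, or top-aligned shape) by direct computation and then argue that an arbitrary UTL+SC1P matrix reduces to one of these canonical shapes without loss of generality, which would complete the proof.
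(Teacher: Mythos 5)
Your route is genuinely different from the paper's. The paper does not argue combinatorially at all: after deleting all-zero rows and columns it idealizes the $1$-region as a filled right triangle (or a block-diagonal union of such shapes, Figure 2) and invokes the classical fact that the largest axis-parallel rectangle inscribed in a right triangle is the ``midpoint rectangle'' whose sides are half the two legs and whose outer corner is the midpoint of the hypotenuse; that rectangle straddles the median row and median column, which is the entire justification given. Your plan --- explicit staircase coordinates, reduction to seed rows/columns, and a unimodality argument over seeds --- is more ambitious, since it tries to prove the statement for \emph{every} UTL matrix with direct SC1P rather than for the three idealized shapes.

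Unfortunately, two of your steps fail, and the second failure is fatal to the level of generality you are aiming for. Step 1 is false: UTL plus direct SC1P does not force $l_i,r_i$ (or $u_j,d_j$) to be monotone, and the $1$-region need not be a single staircase; the paper's own scenario (c) is the counterexample class. Concretely, take the $9\times 9$ matrix whose $1$'s form three all-$1$ blocks: rows $1$--$4$ $\times$ columns $6$--$9$, rows $5$--$7$ $\times$ columns $3$--$5$, and rows $8$--$9$ $\times$ columns $1$--$2$. Its row sums $(4,4,4,4,3,3,3,2,2)$ are non-increasing, its column sums $(2,2,3,3,3,4,4,4,4)$ are non-decreasing, and every row and column has consecutive $1$'s, yet $l_5=3<l_4=6$ and $u_1=8>u_6=1$, so the claimed monotonicity and the staircase picture both collapse. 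Worse, the same matrix defeats Step 3 and, in fact, the lemma in the generality you intend: its unique maximum-area all-$1$ submatrix is rows $1$--$4$ $\times$ columns $6$--$9$ (area $16$), which contains neither the median row $5$ nor the median column $5$. Hence the seed-area function is not maximized at the median, and no ``reduce to a canonical shape without loss of generality'' argument can close the gap, because the statement being proved is not true outside the idealized configurations. To get something provable you must either restrict to the triangular and balanced-block shapes of Figure 2 --- where your Step 2 seed analysis plus a discrete version of the midpoint-rectangle computation does go through --- or weaken the conclusion to an approximation guarantee (the median-seeded rectangle has area within a constant factor of the optimum), which is all the MEBF heuristic actually requires.
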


Figure 2 presented three simplified scenarios of UTL matrix that has direct SC1P. In (a), (b), the 1’s are organized in triangular shape, where certain rows in (a) and certain columns in (b) are all zero, and in (c), the 1’s are shaped in block diagonal. After removing all-zero rows and columns, the upper triangular area of the shuffled matrix is dense in 1. It is easy to show that a rectangular with the largest area in a triangular is the one defined by the three midpoints of the three sides, together with the vertex of the right angle of the triangular, as colored by red in Figure 2. The width and height of the rectangular equal to half of the two legs of the triangular, i.e. \((\frac{m}{2},\frac{n_0}{2})\), \((\frac{m_0}{2},\frac{n}{2})\), \((\frac{m}{2},\frac{n}{2})\) for the three  scenarios in Figure 2 respectively. According to Lemma 2, this largest rectangular contains at least one row or one column (colored in yellow) of the largest all 1 submatrix in the matrix. Consequently, starting with one row or column, expansions with new rows or columns could be done easily if they show strong similarity to the first row or column. After the expansion concludes, one could determine whether to retain the submatrix expanded row-wise or column-wise, whichever reduces more of the cost function.

It is common that the underlying SC1P pattern may not exist for a binary matrix, and we turn to find the matrix with closest SC1P.
\begin{defn}[\textbf{Closest SC1P}]
Given a binary matrix \(X\)  and a nonnegative weight matrix \(W\), a matrix \(\hat{X}\)  that has SC1P and minimizes the distance \(d_W(X,\hat{X})\) is the closest SC1P matrix of \(X\).
\end{defn}

Based on Lemma 2, we could find all the submatrices in Lemma 1 by first permutating rows and columns of matrix \(X\) to be an UTL matrix with closest direct SC1P, locating the largest submatrix of all 1's, to be our first binary pattern. Then we are left with a residual matrix whose entries covered by existing patterns are set to zero. Repeat the process on the residual matrix until convergence. However, finding matrix of closest SC1P of matrix \(X\) is NP-hard \cite{junttila2011patterns,oswald2009simultaneous}.

\begin{lem}[\textbf{Closest SC1P}]
Given a binary matrix \(X\) and a nonnegative weight matrix \(W\), finding a matrix \(\hat{X}\) that has SC1P and minimizes the distance \(d_W(X,\hat{X})\) is an NP-hard problem.
\end{lem}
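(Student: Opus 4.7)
The plan is to reduce from the closest C1P problem: given a binary matrix $Y$ and nonnegative weights $W_Y$, find a binary matrix $\hat Y$ with C1P minimizing $d_{W_Y}(Y,\hat Y)$. This source problem is known to be NP-hard, and since SC1P is strictly more restrictive than C1P (it demands direct C1P on both $X$ and $X^T$ after a joint permutation), one expects the associated closest-matrix problem to be at least as hard. I would make this formal through a padding reduction.

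First, given an instance $(Y, W_Y)$ of closest C1P on an $n\times m$ matrix, I would construct in polynomial time a weighted binary matrix $(X, W)$ in which $Y$ appears as a designated block and the remaining entries form a gadget of additional rows and columns. The gadget is chosen so that, for any row ordering that puts $Y$ into a row-C1P arrangement, the columns of the enlarged matrix $X$ automatically satisfy direct C1P; a stair-step block or an identity-like padding attached along the auxiliary rows and columns is the natural candidate. Entries of the gadget receive weight strictly greater than $|W_Y|$, while the entries originating from $Y$ keep their weights $W_Y$. This weighting forces every optimal closest-SC1P solution to leave the gadget entries untouched.

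Next, I would verify that the two optima coincide. Any closest C1P matrix $\hat Y$ of $(Y, W_Y)$ extends, together with the untouched gadget, to an SC1P matrix $\hat X$ with $d_W(X,\hat X)=d_{W_Y}(Y,\hat Y)$. Conversely, every closest SC1P matrix of $(X,W)$ must agree with the gadget on the auxiliary block, so its $Y$-subblock is a C1P matrix whose weighted distance to $Y$ equals the overall cost. Hence a polynomial-time algorithm for closest SC1P would yield one for closest C1P, establishing NP-hardness.

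The main obstacle is the gadget design: it must simultaneously force direct column C1P for every admissible row ordering of the $Y$-block, while not imposing row C1P constraints beyond those already required on $Y$. Ruling out shortcuts in which an SC1P approximation perturbs $Y$ in a way that is infeasible in the plain C1P setting requires a careful case analysis of how column permutations of $X$ interact with arbitrary row arrangements of the $Y$-block. This combinatorial verification is the technical heart of the argument and is the step carried out explicitly in the cited works of Junttila and of Oswald and Reinelt.
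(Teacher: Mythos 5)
The paper does not actually prove this lemma: it is quoted as a known result, with the hardness attributed to Oswald and Reinelt (2009) and Junttila (2011). So the only question is whether your reduction sketch would stand on its own, and I do not think it does. The central claim of your construction --- that a gadget of extra rows and columns with large weights can make the column condition (direct C1P of $X^T$) ``automatic,'' so that closest SC1P on $X$ collapses to closest C1P on the block $Y$ --- runs into a structural obstruction. The columns of $Y$ remain columns of $X$, and SC1P is hereditary under taking submatrices: deleting rows and columns preserves consecutiveness of the surviving $1$'s under the induced orderings. Hence any SC1P matrix $\hat{X}$ forces its $Y$-block to have SC1P, not merely row-C1P, and no amount of padding elsewhere can relax the requirement that the $1$'s in each column of the $Y$-block be consecutive under the chosen row permutation. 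Your forward direction therefore fails: a closest C1P matrix $\hat{Y}$ need not have SC1P, so it does not extend to an SC1P matrix $\hat{X}$ of equal cost, and the two optima you assert to coincide can genuinely differ. What your argument establishes is only an inequality between the two objective values, which is not enough for a reduction.

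Beyond that, you explicitly defer ``the technical heart'' --- the gadget design and its combinatorial verification --- to the very papers that the lemma cites, so even setting aside the obstruction above, the proposal is a plan rather than a proof. If you want a self-contained argument, you would need either to follow the actual constructions in Oswald--Reinelt/Junttila (which do not take the padding route you describe) or to reduce from a source problem whose instances are already rigid enough that the simultaneous row/column coupling cannot be exploited; the coupling between the two C1P conditions is precisely what makes SC1P harder to handle than C1P, and it is the thing your sketch quietly assumes away.
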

The NP-hardness of the closest SC1P problem has been shown in(Oswald and Reinelt 2009, Junttila 2011). Both exact and heuristic algorithms are known for the problem, and it has also been shown if the number of rows or columns is bounded, then solving closest SC1P requires only polynomial time \cite{oswald2003weighted}. In our MEBF algorithm, we attempt to address it by using heuristic methods and approximation algorithms.

\subsection{Overview}

Overall, MEBF adopted a heuristic approach to locate submatrices that are dense in 1's iteratively. Starting with the original matrix as a residual matrix, at each iteration, MEBF permutates the rows and columns of the current residual matrix so that the 1's are gathered on entries of the upper triangular area. This step is to approximate the permutation operation it takes to make a matrix UTL and direct SC1P. Then as illustrated in Figure 2 and Figure 3, the rectangular of the largest area in the upper triangular, and presumably, of the highest frequencies of 1's, will be captured. The pattern corresponding to this submatrix represents a good rank-1 approximation of the current residual matrix. Before the end of each iteration, the residual matrix will be updated by flipping all the 1's located in the identified submatrix in this step to be 0.

Shown in Figure 3a, for an input Boolean matrix (a1), MEBF first rearranges the matrix to obtain an approximate UTL matrix with closest direct SC1P. This was achieved by reordering the rows so that the row norms are non-increasing, and the columns so that the column norms are non-decreasing (a2). Then, MEBF takes either the column or row with medium number of 1's as one basis or pattern (a3). As the name reveals, MEBF then adopts a median expansion step, where the medium column or row would propogate to other columns or rows with a bidirectional growth algorithm until certain stopping criteria is met. Whether to choose the pattern expanded row-wise or column-wise depends on which one minimizes the cost function with regards to the current residual matrix. Before the end of each iteration, MEBF computes a residual matrix by doing a Boolean subtraction of the newly selected rank-1 pattern matrix from the current residual matrix (a4). This process continues until the convergence criteria was met. If the patterns identified by the bidirectional growth step stopped deceasing the cost function before the convergence criteria was met, another step called weak signal detection would be conducted (a6,a7). Figure 3b illustrated a special case, where the permutated matrix is roughly block diagonal (b1), which corresponds to the third scenario in Figure 2. The same procedure as shown in 3a could guarantee the accurate location of all the patterns. The computational complexity of bidirectional growth and weak signal detection algorithms are both O(nm) and the complexity of each iteration of MEBF is O(nm). The main algorithm of MEBF is illustrated below:
\begin{algorithm}
\SetAlgoLined
\textbf{Inputs:} \(X\in\{0,1\}^{n\times m}\), \(t\in(0,1)\),\(\tau\) \par
\textbf{Outputs:} \(A^*\in\{0,1\}^{n\times k}\), \(B^*\in\{0,1\}^{k\times m}\) \par
\(MEBF(X, t, \tau)\):\par
 \(X_{\text{residual}}\leftarrow X\), \(\gamma_0\leftarrow inf\)\par
 \(A^*\leftarrow NULL\), \(B^*\leftarrow NULL\)\par
 \While{\(!\tau\)}{
 \((\textbf{a},\textbf{b}) \leftarrow \text{bidirectional\_growth}(X_{\text{residual}},t)\)\par
 \(A_{\text{tmp}}\leftarrow append(A^*,\textbf{a})\)\par
 \(B_{\text{tmp}}\leftarrow append(B^*,\textbf{b})\)\par
    \uIf{\(\gamma (A_{\text{tmp}},B_{\text{tmp}}; X)>\gamma_0\)}{
   \((\textbf{a},\textbf{b})\leftarrow \text{weak\_signal\_detection}(X_{\text{residual}},t)\)\;
 \(A_{\text{tmp}}\leftarrow append(A^*,\textbf{a})\)\par
 \(B_{\text{tmp}}\leftarrow append(B^*,\textbf{b})\)\par
    \uIf{\(\gamma(A_{\text{tmp}},B_{\text{tmp}};X)>\gamma_0\)}{ break \;}
   }
   \(A^*\leftarrow append(A^*,\textbf{a})\)\par
   \(B^*\leftarrow append(B^*,\textbf{b})\)\par
   \(\gamma_0 \leftarrow \gamma(A^*,B^*; X)\)\par
   \(X_{\text{residual}_{ij}}\leftarrow0 \, \text{when}\, (\textbf{a}\otimes\textbf{b})_{ij}=1\)
 }
 \caption{MEBF}
\end{algorithm}

\subsection{Bidirectional Growth}
For an input binary (residual) matrix \(X\), we first rearrange \(X\) by reordering the rows and columns so that the row norms are non-increasing, and the column norms are non-decreasing. The rearranged \(X\), after removing its all-zero columns and rows, is denoted as \(X'\), the median column and median row of \(X'\) as \(X'_{:,\text{med}}\) and \(X'_{\text{med},:}\). Denote \(X_{:,(\text{med})}\) and \(X_{(\text{med}),:}\) as the column and row in \(X\) corresponding to \(X'_{:,\text{med}}\) and \(X'_{\text{med},:}\). The similarity between \(X_{:,(\text{med})}\)  and columns of \(X\) can be computed as a column wise correlation vector \(\textbf{m}\in(0,1)^m\), where \(\textbf{m}_i=\frac{<X_{:,i},X_{:,(\text{med})}>}{<X_{:,(\text{med})},X_{:,(\text{med})}>}\). Similarly, the similarity between \(X_{(\text{med}),:}\) and rows of \(X\) can be computed as a vector \(\textbf{n}\in(0,1)^n\),\(\textbf{n}_j=\frac{<X_{j,:},X_{(\text{med}),:}>}{<X_{(\text{med}),:},X_{(\text{med}),:}>}\). A pre-specified threshold \(t\in(0,1)\) was further applied, and two vectors \(\textbf{e}\)  and \(\textbf{f}\) indicating the similarity strength of the columns and rows of \(X\) with \(X_{:,(\text{med})}\) and \(X_{(\text{med}),:}\), are obtained, where \(\textbf{e}_j=(\textbf{m}_j>t)\) and \(\textbf{f}_i=(\textbf{n}_i>t)\). Here the binary vectors \textbf{e} and \textbf{f} \textbf{each represent one potential BMF  pattern }. In each iteration, we select the row or column pattern whichever fits the current residual matrix better,  i.e. the column pattern if \(\gamma(X_{:,(\text{med})},\textbf{e};X)<\gamma(\textbf{f},X_{(\text{med}),:};X)\), or the row pattern otherwise. Here, the cost function is defined as \(\gamma(\textbf{a},\textbf{b};X)=|X\ominus (\textbf{a}\otimes \textbf{b})|\). This is equivalent to selecting a pattern that achieves lower overall cost function at the current step. Obviously here, a smaller \(t\) could achieve higher coverage with less number of patterns, while a larger \(t\) enables a more sparse decomposition of the input matrix with greater number of patterns. Patterns found by bidirectional growth does not guarantee a constant decrease of the cost function. In the case the cost function  increases, we adopt a weak signal detection step before stopping the algorithm.

\begin{algorithm}
\SetAlgoLined
\textbf{Inputs:} \(X\in \{0,1\}^{n\times m}\), \(t\in(0,1]\) \\
\textbf{Outputs:} (\textbf{a},\textbf{b}) \\
\(\text{bidirectional\_growth}(X,t):\)\\
\(X'\leftarrow \textbf{UTL operation on }  X \)\\
\(\textbf{d} \leftarrow X_{:,(\text{med})}\), \(\textbf{e}\leftarrow \{(\frac{<X_{:,j},\textbf{d}>}{<\textbf{d},\textbf{d}>} >t),j=1,...,m\}\)\\
\( \textbf{f} \leftarrow X_{(\text{med}),:}\), \(\textbf{g}\leftarrow \{(\frac{<X_{i,:},\textbf{f}>}{<\textbf{f},\textbf{f}>} >t),i=1,...,n\}\)\\
\eIf{\(\gamma(\textbf{d},\textbf{e};X)>\gamma(\textbf{g},\textbf{f};X)\)}{
   \(\textbf{a}\leftarrow\textbf{g}\); \(\textbf{b}\leftarrow\textbf{f}\)\;
   }{
   \(\textbf{a}\leftarrow\textbf{d}\); \(\textbf{b}\leftarrow\textbf{e}\)\;
  }
 \caption{Bidirectional Growth}
\end{algorithm}

\subsection{Weak Signal Detection Algorithm}
The bidirectional growth steps do not guarantee a constant decrease of the cost function, especially when after the "large" patterns have been identified and the "small" patterns are easily confused with noise. To identify  weak patterns from a residual matrix, we came up with a week signal detection algorithm to locate the regions that may still have small but true patterns. Here, from the current residual matrix, we search the two columns with the most number of 1's and form a new column that is the intersection of the two columns; and the two rows with the most number of 1's and form a new row that is the intersection of the two rows. Starting from the new column and new row as a pattern, similar to bidirectional growth, we locate the rows or columns in the residual matrix that have high enough similarity to the pattern, thus expanding a single row or column into a submatrix. The one pattern among the two with the lowest cost function with regards to the residual matrix will be selected. And if addition of the pattern to existing patterns could decrease the cost function with regards to the original matrix, it will be retained. Otherwise, the algorithm will stop.
\begin{algorithm}
\SetAlgoLined
\textbf{Inputs:} \(X\in \{0,1\}^{n\times m}\), \(t\in(0,1]\) \\
\textbf{Outputs:} \((\textbf{a},\textbf{b})\) \\
Weak\_signal\_detection(\(X,t\))\\
 \(X'\leftarrow \textbf{UTL operation on } X\)\\
 \(\textbf{d}^1 \leftarrow X'_{:,m}\wedge X'_{:,m-1} \) \\ 
 \(\textbf{e}^1\leftarrow \{(\frac{<X_{:,j},\textbf{d}^1>}{<\textbf{d}^1,\textbf{d}^1>} >t),j=1,...,m\}\)\\
 \(\textbf{e}^2 \leftarrow X'_{1,:}\wedge X'_{2,:}\) \\
 \(\textbf{d}^2\leftarrow \{(\frac{<X_{i,:},\textbf{e}^2>}{<\textbf{e}^2,\textbf{e}^2>} >t),i=1,...,n\}\)\\

\(l \leftarrow \argmin_{l=1, 2}\gamma(\textbf{d}^l,\textbf{e}^l,X)\)

   \(\textbf{a}\leftarrow\textbf{d}^l\); \(\textbf{b}\leftarrow\textbf{e}^l\)\;

 \caption{Weak Signal Detection}
\end{algorithm}

\begin{figure*}
\centering
\includegraphics[width=\textwidth]{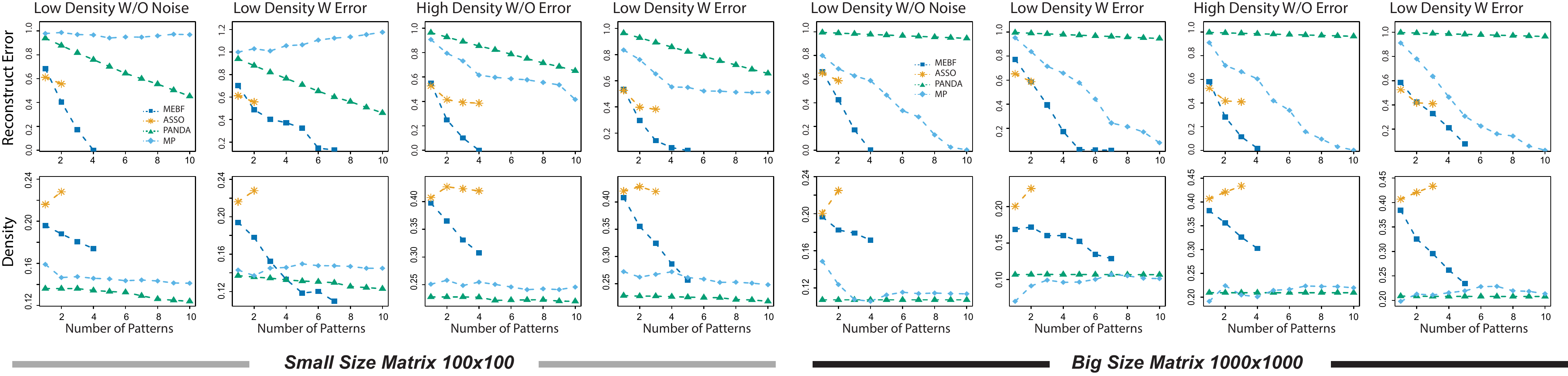}
\caption{Performance comparisons of MEBF, ASSO, PANDA and MP on the accuracy of decomposition. }
\label{fig4}
\end{figure*}

\begin{figure}[t]
\centering
\includegraphics[width=\columnwidth]{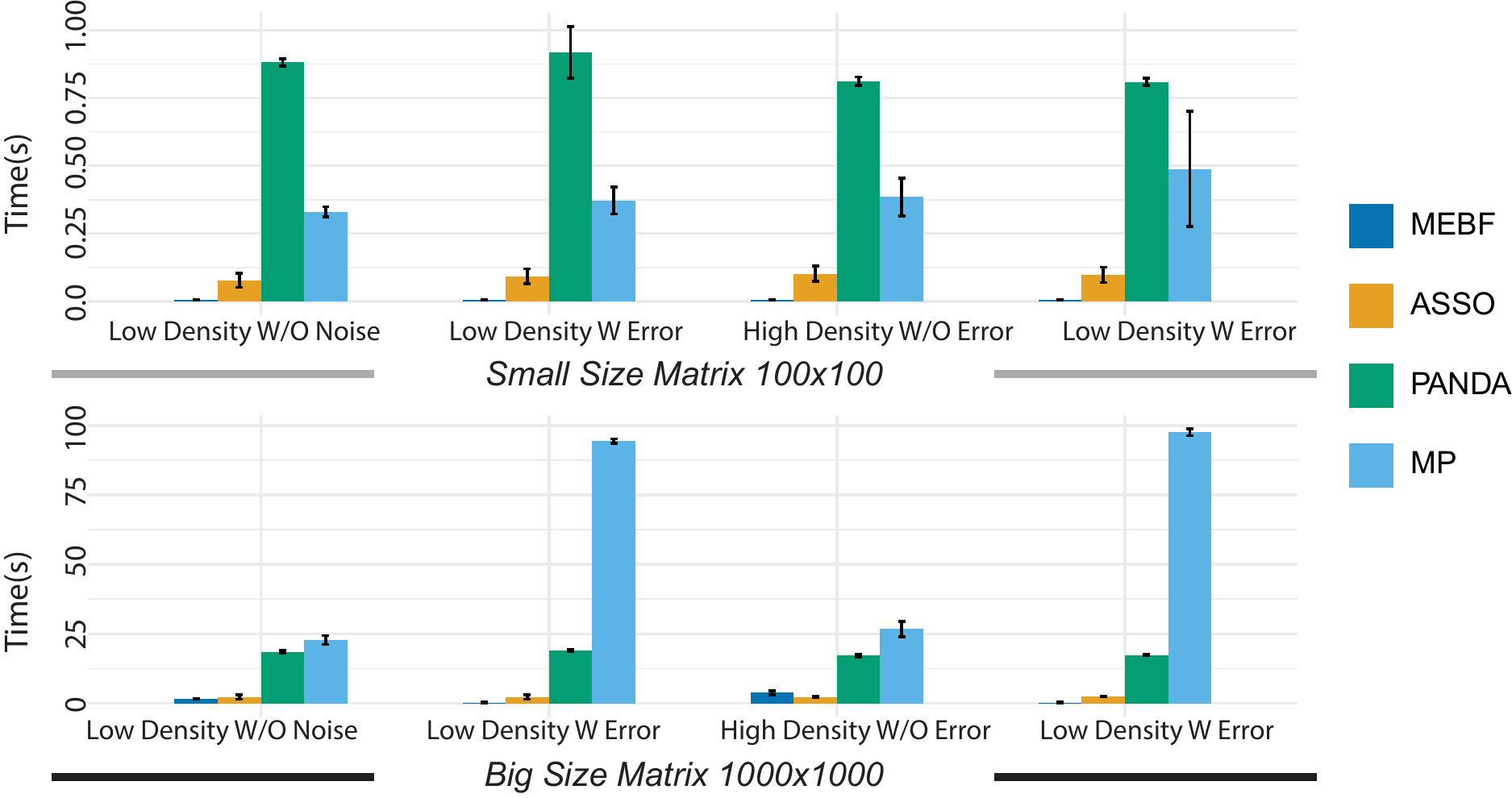}
\caption{Performance comparison of MEBF, ASSO, PANDA and MP on computational cost. }
\label{fig5}
\end{figure}

\section{Experiment}
\subsection{Simulation data}
We first compared MEBF\footnote{The code is available at https://github.com/clwan/MEBF} with three state-of-the-art approaches, ASSO, PANDA and Message Passing (MP), on simulated datasets. \\
A binary matrix \(X^{n\times m}\) is simulated as \[X^{n\times m}=U^{n\times k}\otimes V^{k\times m}+_f E\] where \[U_{ij},V_{ij}\sim Bernoulli(p_0) \quad E_{ij}\sim Bernoulli(p)\] \("+_f"\) is a flipping operation, s.t. \[X_{ij}=\begin{cases}
\vee_{l=1}^k U_{il}\wedge V_{lj},\quad E_{ij}=0\\
\neg \vee_{l=1}^k U_{il}\wedge V_{lj},\quad E_{ij}=1
\end{cases}\]
Here, \(p_0\) controls the density levels of the true patterns, and \(E\) is introduced as noise that could flip the binary values, and the level of noise could be regulated by the parameter \(p\).\\
We simulated two data scales, a small one, $n=m=100$, and a large one $n=m=1000$. For each data scale, the number of patterns \(k\), is set to 5, and we used two density levels, where \(p_0=0.2,0.4\), and two noise levels $p=0,0.01$. 50 simulation was done for each data scale at each scenario.\\
We evaluate the goodness of the algorithms by considering two metrics, namely the reconstruction error and density \cite{belohlavek2018toward,rukat2017bayesian}, as defined below:\\
\[\text{Reconstruction error} := \frac{|(U\otimes V)\ominus(A^*\otimes B^*)|}{|U\otimes V|}\]\\
\[\text{Density} := \frac{|A^{*n\times k}|+|B^{*k\times m}|}{(n+m)\times k}\]\\
Here, \(U\), \(V\) are the ground truth patterns while \(A^*\) and \(B^*\) are the decomposed patterns by each algorithm. The density metric is introduced to evaluate whether the decomposed patterns could reflect the  sparsity/density levels of the true patterns. It is notable that with the same reconstruction error, patterns of lower density, i.e., higher sparsity are more desirable, as it leads to more parsimonious models.\\
In Figure 4 and 5, we show that, compared with ASSO, PANDA and MP, MEBF is the fastest and most robust algorithm. Here, the convergence criteria for the algorithms are set as: (1) 10 patterns were identified; (2) or for MEBF, PANDA and ASSO, they will also stop if a newly identified pattern does not decrease the cost function. \\
As shown in Figure 4, MEBF has the best performance on small and big sized matrices for all the four different scenarios, on 50 simulations each. It achieved the lowest reconstructed error with the least computation time compared with all other algorithms. The convergence rate of MEBF also outperforms PANDA and MP. Though ASSO converges early with the least number of patterns, its reconstruction error is considerably higher than MEBF, especially for high density matrices. In addition, ASSO derived patterns tend to be more dense than the true patterns, while those derived from the other three methods have similar density levels with the true patterns. By increasing the number of patterns, PANDA stably decreased reconstruction error, but it has a considerably slow convergence rate and high computation cost. MP suffered in fitting small size matrices, and in the case of low density matrix with noise, MP derived patterns would not converge.  The standard deviations of reconstruction error and density across 50 simulations is quite low, and was demonstrated by the size of the shapes. The computational cost and its standard deviation for each algorithm is shown as bar plots in Figure 5, and clearly, MEBF has the best computational efficiency among all.

\begin{figure*}[t]
\centering
\includegraphics[width=\textwidth]{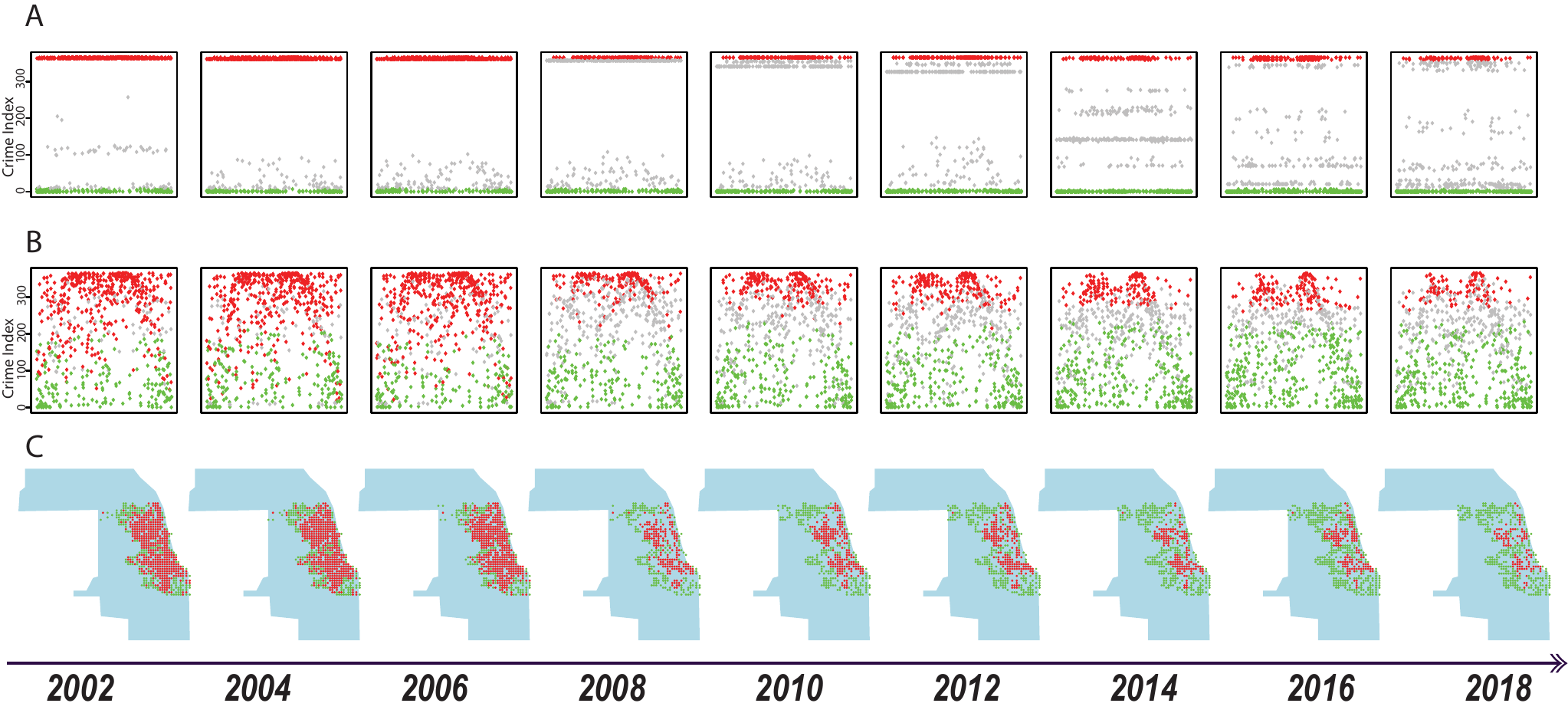}
\caption{MEBF analysis of Chicago crime data over the years.}
\label{fig6}
\end{figure*}

\subsection{Real world data application}
We next focus on the application of MEBF on two real world datasets, and its performance comparison with MP. Both datasets, \textit{Chicago Crime records}\footnote{Chicago crime records downloaded on August 20, 2019 from https://data.cityofchicago.org/Public-Safety} (\(X\in \{0,1\}^{6787\times 752}\)) and \textit{Head and Neck Cancer Single Cell RNA Sequencing data}\footnote{This head and neck sequencing data can be accessed at https://www.ncbi.nlm.nih.gov/geo/query/acc.cgi?acc=GSE103322} (\(X\in \{0,1\}^{344\times 5902}\)), are publicly available. The computational cost of ASSO and PANDA are too inhibitive to be applied to datasets of such a large scale, so they were dropped from the comparisons. Due to a lack of ground-truth of the two low rank construction matrices and the possible high noise level in the real world datasets, it may not be reasonable to examine the reconstruction error with respect to the original matrix. Instead, we focused on two metrics, the density and coverage levels. Density metric was defined as in the simulation data, and coverage rate is defined as \\
\[\text{Coverage rate} :=\frac{|(X\cdot(A^*\otimes B^*))|}{|X|}\]\\
With the same reconstruction error, binary patterns are more desirable to have high sparsity, meaning low density levels, as it makes further interpretation easier and avoids possible overfitting. On the other hand, in many real world data, 0 is more likely to be a false negative occurrence, compared with 1 being a false positive occurrence. In this regard, a higher coverage rate, meaning higher recovery of the 1's, would be a more reasonable metric than lower reconstruction error to the noisy original matrix, as 0's are more likely to be noisy observations than 1's.\\
We compared MEBF and MP for \(k=5\) and \(k=20\), and the density and coverage rate of the derived patterns and time consumption of the two algorithms are presented in Table 1. Overall, as shown in Table 1, for both \(k=5\) and \(k=20\), MEBF outperforms MP in all three measures： higher coverage rate, roughly half the density levels to MP, and  the time consumption of MEBF is approximately 1\% to that of MP. Also noted, with the increased number of  patterns, the coverage rate of MP unexpectedly drops from 0.812 to 0.807 in the crime data, suggesting the low robustness of MP.

Next we discuss in detail the application of BMF on discrete data mining and continuous data mining, and present the findings on the two datasets using MEBF.

\begin{table}[htbp]
\centering
\resizebox{\columnwidth}{!}{\begin{tabular}{l l l l}
\hline\hline
 &  \(\text{Coverage}_\text{(MEBF/MP)}\) & \(\text{Density}_ \text{(MEBF/MP)}\) & \(\text{Time/s}_\text{(MEBF/MP)}\)\\ [1.3ex]
\hline
\(\text{Crime}_{\text{k=5}}\)& 0.835/0.812 & 0.019/0.027 &2.913/333.601\\ [1ex]
\(\text{Crime}_{\text{k=20}}\)& 0.891/0.807 & 0.030/0.066 &10.608/992.011\\ [1ex]
\(\text{Single cell}_{\text{k=5}}\) & 0.496/0.463 & \(2.06\text{e-4}\)/\(2.86\text{e-4}\) & 1.846/137.623\\[1ex]
\(\text{Single cell}_{\text{k=20}}\) & 0.626/0.580 & \(3.34\text{e-4}\)/\(7.22\text{e-4}\) & 5.954/390.217\\[0.6ex]
\hline\hline
\end{tabular}}
\caption{Comparison of MEBF and MP on real world data}\label{tab1}
\end{table}

\textbf{\textit{Discrete data mining}}\\
Chicago is the most populous city in the US Midwest, and it has one of the highest crime rates in the US. It has been well understood that the majority of crimes such as theft and robbery have strong date patterns. For example, crimes were committed for the need to repay regular debt like credit cards, which has a strong date pattern in each month. Here we apply MEBF in analyzing Chicago crime data from 2001 to 2019 to find crime patterns on time and date for different regions. The crime patterns is useful for the allocation of police force, and could also reflect the overall standard of living situation of regions in general.\\
We divided Chicago area into \(\sim 800\) regions of roughly equal sizes. For each of the 19 years, a binary matrix \(X^{n\times m}_d\) for the \(d\)th year is constructed, where \(n\) is the total dates in each year and \(m\) represents the total number of regions, and \(X_{d_{i,j}}=1\) means that crime was reported at date \(i\) in region \(j\) in year \(d\), \(X_{d_{i,j}}=0\) otherwise. MEBF was then applied on each of the constructed binary matrices with parameters \((t=0.7, k=20)\) and outputs \(A^{n\times k}_d\) and \(B^{k\times m}_d\). The reconstructed binary matrix is accordingly calculated as \(X_{d^*}=A^{n\times k}_d\otimes B^{k\times m}_d\). A crime index was defined as the total days with crime committed for regions \(j\) in year \(d\), \(C_{d_j}=\sum_{i=1}^m X_{d_{i,j}}\) .

Figure 6 shows the crime patterns over time, and only even years were shown due to space limit. In 6A, from year 2002-2018, the crime index calculated from the reconstructed matrix, namely, \(C_{d^*_j}=\sum_{i=1}^m X_{d^*_{i,j}}\)  was shown on the \(y\)-axis for all the regions on \(x\)-axis, In 6A, points colored in red indicate those regions with crime index equal to total dates of the year, i.e., 365 or 366, meaning these regions are heavily plagued with crimes, such that there is no day without crime committed. Points colored in green shows vice versa, indicating those regions with no crimes committed over the year. Points are otherwise colored in gray. 6B shows the crime index on the original matrix, and clearly, the green and red regions are distinctly separated, i.e. green on the bottom with low crime index, and red on the top with high crime index. This shows the consistency of the crime patterns between the reconstructed and original crime data, and thus, validate the effectiveness of MEBF pattern mining. Notably, the dramatic decrease in crime index starting from 2008 as shown in Figure 6A and 6B correlates with the reported crime decrease in Chicago area since 2008. Figure 6C shows the crime trend over the years on the map of Chicago. Clearly, from 2008 to now, there is a gradual increase in the green regions, and decrease in the red regions, indicating an overall good transformation for Chicago. This result also indicate that more police force could be allocated in between red and green regions when available. 

\textbf{\textit{Continuous data denoising}}\\
\begin{figure}[t]
\centering
\includegraphics[width=\columnwidth]{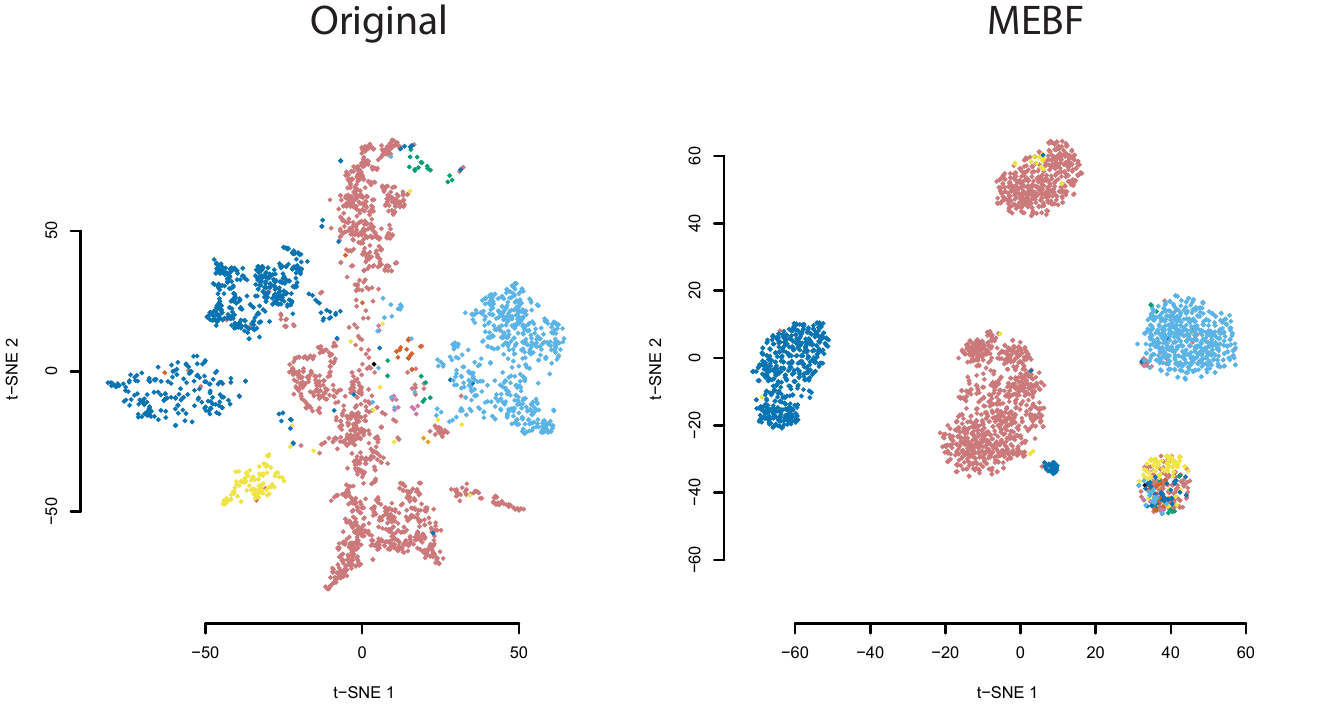}
\caption{Visualization of single cell clustering before and after MEBF.}
\label{fig7}
\end{figure}
Binary matrix factorization could also be helpful in continuous matrix factorization, as the Boolean rank of a matrix could be much smaller than the  matrix rank under linear algebra. Recently, clustering of single cells using single-cell RNA sequencing data has gained extensive utilities in many fields, wherein the biggest challenge is that the high dimensional gene features, mostly noise features, makes the distance measure of single cells in clustering algorithm highly unreliable. Here we aim to use MEBF to denoise the continuous matrix while clustering.\\
We collected a single cell RNA sequencing (scRNAseq) data \cite{puram2017single}, that measured more than 300 gene expression features for over 5,000 single cells, i.e., \(X^{5000\times 300}\). We first binarize original matrix \(X\) into \(X^*\), s.t. \(X_{ij}^*=1\) where \(X_{ij}>0\),and \(X_{ij}^*=0\) otherwise. Then, applying MEBF on \(X^*\) with parameters \((t=0.6,k=5)\) outputs \(A^{n\times k}\), \(B^{k\times m}\). Let \(X^{**}=A\otimes B\) and \(X_{\text{use}}\) be the inner product of \(X^*\) and \(X^{**}\), namely, \(X_{\text{use}}=X\circledast X^{**}\). \(X_{\text{use}}\) represents a denoised version of \(X\), by retaining only the entries in \(X\) with true non-zero gene expressions. And this is reconstructed from the hidden patterns extracted by MEBF.

As shown in Figure 7, clustering on the denoised expression matrix, \(X_{\text{use}}\), results in much tighter and well separated clusters (right) than that on the original expression matrix (left), as visualized by \textit{t}-SNE plots shown in Figure 7. \textit{t}-SNE is an non-linear dimensional reduction approach for the visualization of high dimensional data \cite{maaten2008visualizing}. It is worth noting that, in generating Figure 7, Boolean rank of 5 was chosen for the factorization, indicating that the heterogeneity among cell types with such a high dimensional feature space could be well captured by matrices of Boolean rank equal to 5. Interestingly, we could see a small portion of fibroblast cell (dark blue) lies much closer to cancer cells (red) than to the majority of the fibroblast population, which could biologically indicate a strong cancer-fibroblast interaction in cancer micro-environment. Unfortunately, such interaction is not easily visible in the clustering plot using original noisy matrix.

\section{Conclusion}
We sought to develop a fast and efficient algorithm for boolean matrix factorization, and adopted a heuristic approach to locate submatrices that are dense in 1's iteratively, where each such submatrix corresponds to one binary pattern in BMF. The submatrix identification was inspired by binary matrix permutation theory and geometric segmentation. Approximately, we permutate rows and columns of the input matrix so that the 1's could be "driven" to the upper triangular of the matrix as much as possible, and a dense submatrix could be more easily geometrically located. Compared with three state of the art methods, ASSO, PANDA and MP, MEBF achieved lower reconstruction error, better density and much higher computational efficiency on simulation data of an array of situations. Additionally, we demonstrated the use of MEBF on discrete data pattern mining and continuous data denoising, where in both case, MEBF generated consistent and robust findings.

\section{Acknowledgment}
This work was supported by R01 award \#1R01GM131399-01, NSF IIS (N0.1850360), Showalter Young Investigator Award from Indiana CTSI and Indiana University Grand Challenge Precision Health Initiative.

\bibliographystyle{aaai}
\bibliography{MEBF.bib}

\begin{thebibliography}{}

\bibitem[\protect\citeauthoryear{Araujo, Ribeiro, and
  Faloutsos}{2016}]{araujo2016faststep}
Araujo, M.; Ribeiro, P.; and Faloutsos, C.
\newblock 2016.
\newblock Faststep: Scalable boolean matrix decomposition.
\newblock In {\em Pacific-Asia Conference on Knowledge Discovery and Data
  Mining},  461--473.
\newblock Springer.

\bibitem[\protect\citeauthoryear{Balasubramaniam, Nayak, and
  Yuen}{2018}]{balasubramaniam2018people}
Balasubramaniam, T.; Nayak, R.; and Yuen, C.
\newblock 2018.
\newblock People to people recommendation using coupled nonnegative boolean
  matrix factorization.

\bibitem[\protect\citeauthoryear{Belohlavek, Outrata, and
  Trnecka}{2018}]{belohlavek2018toward}
Belohlavek, R.; Outrata, J.; and Trnecka, M.
\newblock 2018.
\newblock Toward quality assessment of boolean matrix factorizations.
\newblock {\em Information Sciences} 459:71--85.

\bibitem[\protect\citeauthoryear{Junttila and
  others}{2011}]{junttila2011patterns}
Junttila, E., et~al.
\newblock 2011.
\newblock Patterns in permuted binary matrices.

\bibitem[\protect\citeauthoryear{Karaev, Miettinen, and
  Vreeken}{2015}]{karaev2015getting}
Karaev, S.; Miettinen, P.; and Vreeken, J.
\newblock 2015.
\newblock Getting to know the unknown unknowns: Destructive-noise resistant
  boolean matrix factorization.
\newblock In {\em Proceedings of the 2015 SIAM International Conference on Data
  Mining},  325--333.
\newblock SIAM.

\bibitem[\protect\citeauthoryear{Kocayusufoglu, Hoang, and
  Singh}{2018}]{kocayusufoglu2018summarizing}
Kocayusufoglu, F.; Hoang, M.~X.; and Singh, A.~K.
\newblock 2018.
\newblock Summarizing network processes with network-constrained boolean matrix
  factorization.
\newblock In {\em 2018 IEEE International Conference on Data Mining (ICDM)},
  237--246.
\newblock IEEE.

\bibitem[\protect\citeauthoryear{Larsson \bgroup et al\mbox.\egroup
  }{2019}]{larsson2019genomic}
Larsson, A.~J.; Johnsson, P.; Hagemann-Jensen, M.; Hartmanis, L.; Faridani,
  O.~R.; Reinius, B.; Segerstolpe, {\AA}.; Rivera, C.~M.; Ren, B.; and
  Sandberg, R.
\newblock 2019.
\newblock Genomic encoding of transcriptional burst kinetics.
\newblock {\em Nature} 565(7738):251.

\bibitem[\protect\citeauthoryear{Lucchese, Orlando, and
  Perego}{2010}]{lucchese2010mining}
Lucchese, C.; Orlando, S.; and Perego, R.
\newblock 2010.
\newblock Mining top-k patterns from binary datasets in presence of noise.
\newblock In {\em Proceedings of the 2010 SIAM International Conference on Data
  Mining},  165--176.
\newblock SIAM.

\bibitem[\protect\citeauthoryear{Lucchese, Orlando, and
  Perego}{2013}]{lucchese2013unifying}
Lucchese, C.; Orlando, S.; and Perego, R.
\newblock 2013.
\newblock A unifying framework for mining approximate top-$ k $ binary
  patterns.
\newblock {\em IEEE Transactions on Knowledge and Data Engineering}
  26(12):2900--2913.

\bibitem[\protect\citeauthoryear{Maaten and
  Hinton}{2008}]{maaten2008visualizing}
Maaten, L. v.~d., and Hinton, G.
\newblock 2008.
\newblock Visualizing data using t-sne.
\newblock {\em Journal of machine learning research} 9(Nov):2579--2605.

\bibitem[\protect\citeauthoryear{Miettinen and
  Vreeken}{2014}]{miettinen2014mdl4bmf}
Miettinen, P., and Vreeken, J.
\newblock 2014.
\newblock Mdl4bmf: Minimum description length for boolean matrix factorization.
\newblock {\em ACM Transactions on Knowledge Discovery from Data (TKDD)}
  8(4):18.

\bibitem[\protect\citeauthoryear{Miettinen \bgroup et al\mbox.\egroup
  }{2008}]{miettinen2008discrete}
Miettinen, P.; Mielik{\"a}inen, T.; Gionis, A.; Das, G.; and Mannila, H.
\newblock 2008.
\newblock The discrete basis problem.
\newblock {\em IEEE transactions on knowledge and data engineering}
  20(10):1348--1362.

\bibitem[\protect\citeauthoryear{Monson, Pullman, and
  Rees}{1995}]{monson1995survey}
Monson, S.~D.; Pullman, N.~J.; and Rees, R.
\newblock 1995.
\newblock A survey of clique and biclique coverings and factorizations of (0,
  1)-matrices.
\newblock {\em Bull. Inst. Combin. Appl} 14:17--86.

\bibitem[\protect\citeauthoryear{Oswald and
  Reinelt}{2009}]{oswald2009simultaneous}
Oswald, M., and Reinelt, G.
\newblock 2009.
\newblock The simultaneous consecutive ones problem.
\newblock {\em Theoretical Computer Science} 410(21-23):1986--1992.

\bibitem[\protect\citeauthoryear{Oswald}{2003}]{oswald2003weighted}
Oswald, M.
\newblock 2003.
\newblock {\em Weighted consecutive ones problems}.
\newblock Ph.D. Dissertation.

\bibitem[\protect\citeauthoryear{Puram \bgroup et al\mbox.\egroup
  }{2017}]{puram2017single}
Puram, S.~V.; Tirosh, I.; Parikh, A.~S.; Patel, A.~P.; Yizhak, K.; Gillespie,
  S.; Rodman, C.; Luo, C.~L.; Mroz, E.~A.; Emerick, K.~S.; et~al.
\newblock 2017.
\newblock Single-cell transcriptomic analysis of primary and metastatic tumor
  ecosystems in head and neck cancer.
\newblock {\em Cell} 171(7):1611--1624.

\bibitem[\protect\citeauthoryear{Ravanbakhsh, P{\'o}czos, and
  Greiner}{2016}]{ravanbakhsh2016boolean}
Ravanbakhsh, S.; P{\'o}czos, B.; and Greiner, R.
\newblock 2016.
\newblock Boolean matrix factorization and noisy completion via message
  passing.
\newblock In {\em ICML},  945--954.

\bibitem[\protect\citeauthoryear{Rukat \bgroup et al\mbox.\egroup
  }{2017}]{rukat2017bayesian}
Rukat, T.; Holmes, C.~C.; Titsias, M.~K.; and Yau, C.
\newblock 2017.
\newblock Bayesian boolean matrix factorisation.
\newblock In {\em Proceedings of the 34th International Conference on Machine
  Learning-Volume 70},  2969--2978.
\newblock JMLR. org.

\bibitem[\protect\citeauthoryear{Stockmeyer}{1975}]{stockmeyer1975set}
Stockmeyer, L.~J.
\newblock 1975.
\newblock {\em The set basis problem is NP-complete}.
\newblock IBM Thomas J. Watson Research Division.

\bibitem[\protect\citeauthoryear{Wall, Rechtsteiner, and
  Rocha}{2003}]{wall2003singular}
Wall, M.~E.; Rechtsteiner, A.; and Rocha, L.~M.
\newblock 2003.
\newblock Singular value decomposition and principal component analysis.
\newblock In {\em A practical approach to microarray data analysis}. Springer.
\newblock  91--109.

\end{thebibliography}

\end{document}